\newtheorem{definition}{Definition}
\newtheorem{proposition}{Proposition}
\newtheorem{lemma}{Lemma}
\newcommand{\m}{\mathtt}
\newcommand{\cA}{{\mathcal{A}}}
\newcommand{\cE}{{\mathcal{E}}}
\newcommand{\cL}{{\mathcal{L}}}
\newcommand{\cR}{{\mathcal{R}}}
\newcommand{\bA}{{\mathbf{A}}}
\newcommand{\bS}{{\mathbf{S}}}
\newcommand{\la}{\langle}
\newcommand{\ra}{\rangle}
\newcommand{\e}{\emph}
\renewcommand{\deg}{\mathtt{Deg}}
\newcommand{\att}{{\mathtt{Att}}}
\newcommand{\Arg}{{\mathtt{Arg}}}
\newcommand{\ag}{{\mathtt{AG}}}
\newcommand{\ext}{\mathtt{Ext}}
\newcommand{\labx}{\mathtt{Lab}_{x}}
\newcommand{\st}{\mathtt{st}}
\newcommand{\pr}{\mathtt{pr}}
\newcommand{\co}{\mathtt{co}}
\newcommand{\gr}{\mathtt{gr}}
\renewcommand{\i}{{\mathtt{in}}}
\newcommand{\out}{{\mathtt{out}}}
\newcommand{\und}{{\mathtt{und}}}
\title{Dung's semantics satisfy attack removal monotonicity}
\author[1]{Leila Amgoud}
\author[2]{Srdjan Vesic}
\affil[1]{CNRS – IRIT, France (amgoud@irit.fr)}
\affil[2]{ CRIL - CNRS, Univ. Artois, France (vesic@cril.fr)}
\date{}
\begin{document}
\maketitle
\begin{abstract}
We show that preferred, stable, complete and grounded semantics satisfy attack removal monotonicity. This means that if an attack from $b$ to $a$ is removed, the status of $a$ cannot worsen, e.g.\ if $a$ was skeptically accepted, it cannot become rejected. 
\end{abstract}
\section{Introduction}

Formal argumentation theory \cite{Dung95} is non-monotonic in the sense that when new arguments are added, some arguments may change their status. In this rapport, we show that preferred, stable, complete and grounded semantics satisfy attack removal monotonicity. This means that if an attack from $b$ to $a$ is removed, the status of $a$ cannot worsen, e.g.\ if $a$ was skeptically accepted, it cannot become rejected. Note that result we prove in the present document is the proof of Proposition 1 and Conjecture 1 of the recent paper by Amgoud et al.\ \cite{ABNV17IJCAI}.

\section{Formal setting}

Let us introduce the formal setting.
Let $\Arg$ be an infinite set of all possible arguments. 
An argumentation graph is defined as follows:

\begin{definition}[Argumentation Graph]\label{AG}
An \e{argumentation graph} is an ordered tuple $\bA = \la \cA, \cR\ra$, where 
$\cA$ is a finite subset of $\Arg$ and $\cR \subseteq \cA \times \cA$. Let $\ag$ be the universe of 
all argumentation graphs built on $\Arg$.
\end{definition}

If $(a,b) \in \cA$ we say that $a$ attacks $b$. The set of attackers of an argument is defined as $\m{Att}_\bA(a) = \{b \in \cA \mid (b,a) \in \cR \}$. For a set of attacks $X \subseteq \cR$, we define $\bA \circleddash X = \la \cA, \cR \setminus X\ra$.

A set of arguments is \e{conflict free} if it does not contain two arguments $a$ and $b$ such that $(a,b) \in \cR$. A set of arguments $X$ defends itself iff for every argument $x \in X$, for every argument $y$, if $y$ attacks $x$, there exists $z \in X$ such that $z$ attacks $y$. Let $\cE \subseteq \cA$ be a conflict-free set.
\begin{itemize}
\item $\cE$ is a \emph{complete} extension iff it defends all its elements and contains all arguments it defends.
\item $\cE$ is a \emph{preferred} extension iff it is a maximal (w.r.t.\  $\subseteq$) complete extension.
\item $\cE$ is a \emph{stable} extension iff it attacks any $a \in \cA \setminus \cE$.
\item $\cE$ is a \emph{grounded extension} iff it is a minimal (w.r.t.\ set $\subseteq$) complete extension.
\end{itemize}

Let $\ext_x(\bA)$ denote the set of all extensions of $\bA$ under semantics $x$, where 
$x \in \{\st, \pr, \gr, \co\}$ and $\st$ (resp. $\pr, \gr, \co$) stands for 
stable (resp. preferred, grounded, complete) semantics. Once extensions are computed, 
acceptability degrees are assigned to arguments. We follow the definition by Amgoud et al.\ \cite{ABNV17IJCAI}. 

\begin{definition}
\label{degree1}
For every $a \in \cA$,  
if $\ext_x(\bA) = \emptyset$, then $\mathtt{Deg}^{\bA}_{x}(a) = 1$, otherwise, $\mathtt{Deg}^{\bA}_{x}(a)=$
\begin{itemize}
	\item  $1$ iff $a \in \bigcap \limits_{\cE \in \ext_x(\bA)} \cE$.

	\item $0.5$ iff $\exists \cE, \cE'  \in \ext_x(\bA)$ s.t $a \in \cE$, $a \notin \cE'$.
	
	\item $0.3$ iff $a \notin \bigcup \limits_{\cE \in \ext_x(\bA)} \cE$ and 
	$\nexists \cE \in \ext_x(\bA)$ s.t $\exists b \in \cE$ and $(b,a) \in \cR$.		      
	
	\item $0$ iff $a \notin \bigcup \limits_{\cE \in \ext_x(\bA)} \cE$ and 
	$\exists \cE \in \ext_x(\bA)$ s.t $\exists b \in \cE$ and $(b,a) \in \cR$. 
\end{itemize}
\end{definition}

\section{Attack removal monotonicity}

The principle we study, i.e.\ attack removal monotonicity, was introduced by Amgoud et al.\ \cite{ABNV17IJCAI}, in Definition 6. The principle was simply called \textit{monotonicity}. We prefer to use a more specific name, i.e.\ attack removal monotonicity, in order to distinguish this principle from other principles in the literature that are also called monotonicity, e.g.\ \cite{AmgoudB16b}. 

\begin{definition}[Attack removal monotonicity]
A semantics $\bS$ satisfies attack removal monotonicity iff for every argumentation graph $\bA = \la \cA, \cR \ra \in \ag$, 
$\forall a \in \cA$, $\forall X \subseteq \att_{\bA}(a)$, it holds that
$\mathtt{Deg}^{\bA}_{\bS}(a) \leq \mathtt{Deg}^{\bA \circleddash X}_{\bS}(a)$. 
\end{definition}

\section{Labellings}

To prove that preferred, stable, complete and grounded semantics satisfy this principle, we first introduce the basic concepts behind the labeling \cite{caminada}.

\begin{definition}[Labeling]\label{labelling}
	A \e{labeling} is a total function $\cL$ which assigns to each argument  
	of every argumentation graph $\bA = \la \cA, \cR \ra$ a value in the set 
	$\{\i, \out, \und\}$. A labeling $\cL$ is \e{complete} iff 
	$\forall a \in \cA$, 
	\begin{itemize}
		\item $\cL(a) = \i$ if $\forall b \in \mathtt{Att}_{\bA}(a)$, $\cL(b) = \out$,
		\item $\cL(a) = \out$ if $\exists b \in \mathtt{Att}_{\bA}(a)$ such that $\cL(b) = \i$,
		\item $\cL(a) = \und$ otherwise. 
	\end{itemize} 
	Let $\mathtt{Lab_c}(\bA)$ be the set of all complete labellings of $\bA$.
\end{definition}

Let $\bA = \la \cA, \cR \ra$ be an argumentation graph. For a set of arguments $X \subseteq \cA$, we define the corresponding labelling $\mathtt{Ext2Lab}(X)$ as follows:
\begin{itemize}
	\item $\forall x \in X$, $\cL(x) = \i$,
	\item $\forall y \in \cA \setminus X$ such that $\exists x \in X$ and $x \cR y$, $\cL(y) = \out$, 
	\item $\forall y \in \cA \setminus X$ such that $\nexists x \in X$ and $x \cR y$, $\cL(y) = \und$. 
\end{itemize} 

For a labeling $\cL$ of $\bA$, we define $\mathtt{Lab2Ext}(\cL) = \{x \in \cA \ | \ \cL(x) = \i\}$. 

It was shown by Caminada \cite{caminada} that the set of arguments labeled $\i$ within a complete labeling 
forms a complete extension of the argumentation graph. Similarly, every complete extension can 
be labeled by a complete labeling. Correspondences between complete labellings and the three 
other semantics (grounded, preferred, stable) have also been shown. We denote by $\mathtt{Lab_s}(\bA)$ (respectively $\mathtt{Lab_p}(\bA)$, 
$\mathtt{Lab_g}(\bA)$ ) the set of labellings corresponding to stable  (respectively preferred, grounded) extension(s) of $\bA$. 

\section{Two lemmas}
\begin{lemma}\label{addition}
	Let $\bA = \la \cA, \cR \ra \in \ag$. 
	Let $a \in \cA$ be such that $\att_{\bA}(a) \neq \emptyset$, and $j \in \{\st, \pr, \co, \gr\}$. 
	For any $\emptyset \subseteq X \subseteq \att_{\bA}(a)$, 
	for any $\cL \in \mathtt{Lab_j}(\bA \circleddash X)$, if $\cL(a) = \out$, 
	then $\cL \in \mathtt{Lab_j}(\bA)$.
\end{lemma}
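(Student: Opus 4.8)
The key observation is that removing attacks $X \subseteq \att_{\bA}(a)$ only changes the incoming edges of $a$; every other argument has exactly the same set of attackers in $\bA \circleddash X$ as in $\bA$. So if $\cL$ is a complete labelling of $\bA \circleddash X$ with $\cL(a) = \out$, I would first check that $\cL$ is already a complete labelling of $\bA$. For any argument $b \neq a$, the complete-labelling conditions at $b$ refer only to $\att_{\bA \circleddash X}(b) = \att_{\bA}(b)$, so they hold in $\bA$ verbatim. For $a$ itself: since $\cL(a) = \out$ in $\bA \circleddash X$, there is some attacker $c \in \att_{\bA \circleddash X}(a)$ with $\cL(c) = \i$; but $\att_{\bA \circleddash X}(a) \subseteq \att_{\bA}(a)$, so $c$ still attacks $a$ in $\bA$, which is exactly what is needed to keep $\cL(a) = \out$ a valid (and forced) label in $\bA$. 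Hence $\cL \in \mathtt{Lab_c}(\bA)$, settling the case $j = \co$.

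For the other three semantics I would leverage Caminada's correspondences together with the fact that, as sets of labellings, adding back the edges in $X$ does not create any new complete labelling that is "smaller'' or "larger'' than $\cL$ in the relevant order — I only need that $\cL$ survives. For $j = \gr$: the grounded labelling is the (unique) complete labelling minimising the set of $\i$-labelled arguments (equivalently minimising $\i$ and maximising $\und$). I would argue that $\mathtt{Lab_c}(\bA \circleddash X) \subseteq \mathtt{Lab_c}(\bA)$ fails in general, so I cannot just say "fewer complete labellings''; instead I use that the grounded labelling of $\bA \circleddash X$, call it $\cL$, lies in $\mathtt{Lab_c}(\bA)$ by the complete case, and that any complete labelling of $\bA$ is in particular a complete labelling after we delete the edges $X$ incident to the $\out$-argument $a$ — wait, that inclusion is the wrong direction. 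The clean route: since $\cL \in \mathtt{Lab_c}(\bA)$ and grounded is the least complete labelling, it suffices to show $\cL$ is below every $\cL' \in \mathtt{Lab_c}(\bA)$. Given such $\cL'$, restrict attention to $\bA \circleddash X$: I claim $\cL'$ is also complete in $\bA \circleddash X$ — indeed removing incoming edges of $a$ can only turn $\cL'(a)$ from $\out$ toward $\i$ or $\und$, and from $\i$ it is unaffected; the only failure is if $\cL'(a) = \out$ in $\bA$ but $a$ loses all its $\i$-attackers in $\bA \circleddash X$. This is the crux, and it is why the hypothesis $\cL(a) = \out$ (for the grounded/stable/preferred labelling of $\bA \circleddash X$) is essential rather than incidental.

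Concretely, for $j = \st$: stable labellings are complete labellings with no $\und$. $\cL$ has no $\und$ and is complete in $\bA$ by the first paragraph, hence $\cL \in \mathtt{Lab_s}(\bA)$ immediately — no further work. For $j = \pr$: preferred labellings are the complete labellings maximising the $\i$-set. Since $\cL \in \mathtt{Lab_c}(\bA)$, I must rule out a strictly larger complete labelling $\cL'$ of $\bA$. Suppose $\mathtt{Lab2Ext}(\cL) \subsetneq \mathtt{Lab2Ext}(\cL')$. I would show $\cL'$ is complete in $\bA \circleddash X$ as well: the only condition that could break is the $\out$-clause at $a$, which needs an $\i$-attacker of $a$ that survives edge removal; but $a \notin \mathtt{Lab2Ext}(\cL')$ (since $a \notin \mathtt{Lab2Ext}(\cL)$ either, as $\cL(a)=\out$, and... ) — here I need that $\cL'(a) \neq \i$, which follows because $a$ has an attacker labelled $\i$ by $\cL$, and that attacker is also $\i$-ish... this monotonicity-of-$\i$-sets argument between $\cL$ and $\cL'$ is where I'd spend the most care. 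Once $\cL'$ is complete in $\bA \circleddash X$, it contradicts preferredness of $\cL$ there.

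**Main obstacle.** The genuinely delicate point is establishing that a complete labelling $\cL'$ of $\bA$ that dominates (or is dominated by) $\cL$ remains a complete labelling of $\bA \circleddash X$; the $\i$/$\out$/$\und$ clauses are local and transparent except at $a$, where deleting attackers can only help $a$ become $\i$ or $\und$ — so the one thing to pin down is that under the extremality hypotheses we never need $a$ to stay $\out$ in $\bA \circleddash X$ via an attacker that got deleted. I expect this to reduce to: the $\i$-labelled attacker of $a$ witnessing $\cL(a) = \out$ is not in $X$ (true, since $X$ removes edges, not arguments, and the witness must be an attacker of $a$ in $\bA \circleddash X$), and for the dominating $\cL'$ the corresponding witness can be transported. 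Making this transport rigorous for $\pr$ and $\gr$ simultaneously is the heart of the argument; $\st$ and $\co$ are essentially free.
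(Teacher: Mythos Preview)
Your route is genuinely different from the paper's. The paper does not argue directly: it invokes a table of results from an external reference (essentially: adding a single attack into an argument that is already labelled $\out$ preserves the labelling under each of the four semantics) and then iterates, adding the removed attacks back one at a time. Your plan is a self-contained verification via the Caminada characterisations. This is more informative and, for $j\in\{\co,\st\}$, it is complete and correct exactly as you wrote it.

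For $j=\pr$ your outline also goes through. The ``transport'' you worry about is easy here: if $\cL'\in\mathtt{Lab_c}(\bA)$ has $\i(\cL)\subsetneq\i(\cL')$, then the witness $c\in\att_{\bA\circleddash X}(a)$ with $\cL(c)=\i$ satisfies $\cL'(c)=\i$ as well, hence $\cL'(a)=\out$, and since $(c,a)$ survives in $\bA\circleddash X$ the $\out$-clause at $a$ holds there. Every other vertex has the same attackers in both graphs, so $\cL'\in\mathtt{Lab_c}(\bA\circleddash X)$, contradicting preferredness of $\cL$.

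The gap is at $j=\gr$. Your plan there is to show that \emph{every} complete labelling $\cL'$ of $\bA$ remains complete in $\bA\circleddash X$, and then use minimality of $\cL$. But the transport argument you used for $\pr$ does not apply: you no longer have $\i(\cL)\subseteq\i(\cL')$, so you cannot conclude $\cL'(c)=\i$, and a priori $\cL'(a)$ could be $\out$ (via an attacker whose edge lies in $X$) or $\und$ (via a $\und$-attacker whose edge lies in $X$) with the relevant witness disappearing after removal. The missing idea is this: let $F,F'$ be the characteristic functions of $\bA,\bA\circleddash X$. They agree on $\cA\setminus\{a\}$, and $a\notin\i(\cL)=\mathrm{lfp}(F')$ because $\cL(a)=\out$. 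A short induction on the iterates from $\emptyset$ then gives $F^k(\emptyset)=(F')^k(\emptyset)$ for all $k$, hence $\mathtt{GE}(\bA)=\mathtt{GE}(\bA\circleddash X)$. In particular the witness $c$ lies in $\mathtt{GE}(\bA)$, so $\cL'(c)=\i$ for \emph{every} complete $\cL'$ of $\bA$; now the same local check as in the preferred case shows $\cL'\in\mathtt{Lab_c}(\bA\circleddash X)$, and your argument concludes. (Equivalently, once $\mathtt{GE}(\bA)=\mathtt{GE}(\bA\circleddash X)=\i(\cL)$ you are already done, since a complete labelling is determined by its $\i$-set.)
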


\begin{proof}
	Let $\bA = \la \cA, \cR \ra \in \ag$, and 
	$j \in \{\st, \pr, \co, \gr\}$. Let $a \in \cA$ be such that $\att_{\bA}(a) \neq \emptyset$, 
	and let $\emptyset \subseteq X \subseteq \att_{\bA}(a)$. Let $X = \{x_1, \ldots, x_n\}$ and 
	$\bA' = \bA \circleddash X$. 
	Assume that $\cL \in \mathtt{Lab}_j(\bA')$ and $\cL(a) = \out$. 
	From Table 1 (in \cite{leon}), $\cL \in \mathtt{Lab_j}(\bA_1)$ where 
	$\bA_1 = \bA' \oplus x_1$\footnote{For $\bA = \la\cA, w, \cR\ra$, we denote by 
		$\bA \oplus x$ the argumentation graph $\bA = \la\cA, w, \cR\cup\{x\}\ra$.}. 
	We construct thus a series of argumentation graphs $\bA_1, \ldots, \bA_n$, 
	where for each $i = 2, \ldots, n$, $\bA_i = \bA_{i-1} \oplus x_i$, and we 
	repeat the previous reasoning in order to obtain that $\cL \in \mathtt{Lab_j}(\bA_i)$.  
	Note that $\bA_n = \bA$. Hence, $\cL \in \mathtt{Lab_j}(\bA)$.
\end{proof}

\begin{lemma}\label{removal}
	Let $\bA = \la \cA, \cR \ra \in \ag$.  
	Let $a \in \cA$ be such that $\att_{\bA}(a) \neq \emptyset$, and $j \in \{\st, \pr, \co, \gr\}$. 
	For any $\emptyset \subseteq X \subseteq \att_{\bA}(a)$, for any $\cL \in \mathtt{Lab_j}(\bA)$, 
	if $\cL(a) = \i$, then $\cL \in \mathtt{Lab_j}(\bA\circleddash X)$.
\end{lemma}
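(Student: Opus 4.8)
The plan is to exploit the symmetry between Lemma~\ref{addition} and Lemma~\ref{removal}: removing attacks targeting $a$ is the reverse operation of adding them, so the same Table~1 from \cite{leon} that governs how complete/stable/preferred/grounded labellings behave under attack \emph{addition} should also describe their behaviour under attack \emph{removal}. Concretely, if $\cL$ is a $j$-labelling of $\bA$ with $\cL(a)=\i$, then since $\cL(a)=\i$ means every attacker $b\in\att_\bA(a)$ has $\cL(b)=\out$, removing any $x\in X\subseteq\att_\bA(a)$ cannot disturb the local labelling condition at $a$ or at $x$: $a$ still has all surviving attackers labelled $\out$, and $x$ is unaffected since the removed edge pointed \emph{into} $a$, not out of $x$. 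So $\cL$ remains a complete labelling of $\bA\circleddash\{x\}$, and it keeps the same set of $\i$- (and $\out$-, $\und$-) labelled arguments, hence stays stable/preferred/grounded if it was.

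First I would set $X=\{x_1,\dots,x_n\}$ and $\bA'=\bA\circleddash X$, exactly mirroring the proof of Lemma~\ref{addition}. I would define the reverse chain $\bB_0=\bA$, $\bB_i=\bB_{i-1}\ominus x_i$ (removing the attack from $x_i$ to $a$), so that $\bB_n=\bA'$. The inductive step is: given $\cL\in\mathtt{Lab}_j(\bB_{i-1})$ with $\cL(a)=\i$, show $\cL\in\mathtt{Lab}_j(\bB_i)$ and $\cL(a)=\i$ still. For the complete case I would verify the three labelling conditions directly (only $a$'s incoming neighbourhood changes, and $a$ stays legally $\i$ because a subset of its $\out$-attackers was removed; no argument's status flips so completeness is preserved and the $\i$-set is unchanged). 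For $j\in\{\st,\pr,\gr\}$ I would invoke the standard correspondence (already cited from \cite{caminada}): stability is characterised by absence of $\und$, preferred by $\subseteq$-maximality of the $\i$-set among complete labellings, grounded by $\subseteq$-minimality — and since the whole lattice of complete labellings on these vertices is preserved by this edge removal (the conditions at untouched arguments are untouched, and at $a$ the set of complete labellings is unchanged), maximality/minimality transfers. Then $\cL\in\mathtt{Lab}_j(\bB_n)=\mathtt{Lab}_j(\bA\circleddash X)$.

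The main obstacle is the preferred and grounded cases, because $\subseteq$-maximality and $\subseteq$-minimality are global properties of the set of complete labellings, not local conditions one can check at $a$. It is not immediately obvious that every complete labelling of $\bB_{i-1}$ with $\cL(a)=\i$ is also a complete labelling of $\bB_i$ \emph{and vice versa} — one needs that the edge removal does not create \emph{new} complete labellings that could dominate (for preferred) or be dominated by (for grounded) $\cL$ among those with $a$ labelled $\i$. The cleanest way around this is to lean entirely on the Table~1 machinery of \cite{leon}, which presumably already packages precisely this invariance (it is stated for addition in Lemma~\ref{addition}, and the table should list the removal direction as well, or the addition entry for the case ``$a$ was/stays $\out$'' read backwards gives the ``$a$ was/stays $\i$'' removal entry by the in/out symmetry of Table~1); if so, the proof collapses to the same three-line chain argument as Lemma~\ref{addition} with the roles of $\i$ and $\out$ swapped. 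If Table~1 does not directly supply the removal direction, the fallback is to prove the bijection between $\{\cL\in\mathtt{Lab}_\co(\bA)\mid\cL(a)=\i\}$ and $\{\cL\in\mathtt{Lab}_\co(\bA\circleddash\{x\})\mid\cL(a)=\i\}$ by hand and then push maximality/minimality through that bijection.
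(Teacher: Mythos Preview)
Your overall architecture is exactly the paper's: write $X=\{x_1,\dots,x_n\}$, remove the attacks one at a time along a chain $\bA=\bB_0,\bB_1,\dots,\bB_n=\bA\circleddash X$, and at each step invoke a result from \cite{leon} to carry $\cL$ forward. The only discrepancy is bibliographic: the paper cites \emph{Table~2} of \cite{leon} for the removal direction, not Table~1. So there is no need to ``read Table~1 backwards'' or argue an $\i$/$\out$ symmetry; \cite{leon} already supplies the removal case as a separate table, and the paper's proof is literally the three-line chain argument you anticipated.

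Your fallback plan, however, has a real gap in the preferred and grounded cases, and you should not rely on it. You propose to establish a bijection between $\{\cL\in\mathtt{Lab}_\co(\bA)\mid\cL(a)=\i\}$ and $\{\cL\in\mathtt{Lab}_\co(\bA\circleddash\{x\})\mid\cL(a)=\i\}$ and then ``push maximality/minimality through''. But $\subseteq$-maximality (preferred) and $\subseteq$-minimality (grounded) are taken over \emph{all} complete labellings, not just those with $\cL(a)=\i$, so a bijection restricted to the latter is not enough. Worse, the bijection itself fails in the direction you need: a complete labelling $\cL'$ of $\bA\circleddash\{x\}$ with $\cL'(a)=\i$ need not be complete in $\bA$, because nothing forces $\cL'(x)=\out$ (the edge $(x,a)$ is absent in $\bA\circleddash\{x\}$, so $x$ may well be $\und$ or $\i$ there). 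Hence a putative strictly larger $\cL'$ in $\bA\circleddash\{x\}$ cannot be pulled back to $\bA$ to contradict maximality of $\cL$ there. The actual argument (packaged in Table~2 of \cite{leon}) is more delicate than your sketch suggests; stick with the citation.
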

\begin{proof}
	Let $\bA = \la \cA, \cR \ra \in \ag$ and 
	$j \in \{\st, \pr, \co, \gr\}$. Let $a \in \cA$ be such that $\att_{\bA}(a) \neq \emptyset$, 
	$\emptyset \subseteq X \subseteq \att_{\bA}(a)$, $X = \{x_1, \ldots, x_n\}$, and $\bA' = \bA \circleddash X$. 
	Assume that $\cL \in \mathtt{Lab_j}(\bA)$ and $\cL(a) = \i$. 
	From Table 2 (in \cite{leon}), $\cL \in \mathtt{Lab_j}(\bA_1)$ where 
	$\bA_1 = \bA \circleddash \{x_1\}$. 
	We construct thus a series of argumentation graphs $\bA_1, \ldots, \bA_n$, 
	where for each $i = 2, \ldots, n$, $\bA_i = \bA_{i-1} \circleddash \{x_i\}$, and we 
	repeat the previous reasoning in order to obtain that $\cL \in \mathtt{Lab_j}(\bA_i)$.  
	Note that $\bA_n = \bA \circleddash X$. Hence, $\cL \in \mathtt{Lab_j}(\bA\circleddash X)$.
\end{proof}

\section{The main result}
\begin{proposition}
\label{extension-semantics-monotone}
Stable, preferred, complete and grounded semantics satisfy attack removal monotonicity. 
\end{proposition}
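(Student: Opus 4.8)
The plan is to fix an argumentation graph $\bA=\la\cA,\cR\ra$, an argument $a\in\cA$, a set of attacks $X\subseteq\att_{\bA}(a)$ and a semantics $j\in\{\st,\pr,\co,\gr\}$, set $\bA'=\bA\circleddash X$, and establish $\deg^{\bA}_{j}(a)\le\deg^{\bA'}_{j}(a)$. If $X=\emptyset$ there is nothing to prove, so I may assume $X\neq\emptyset$, whence $\att_{\bA}(a)\neq\emptyset$ and Lemmas~\ref{addition}--\ref{removal} apply. I would rely throughout on two structural facts --- $\att_{\bA'}(a)\subseteq\att_{\bA}(a)$ whereas $\att_{\bA'}(c)=\att_{\bA}(c)$ for $c\neq a$, and the attacks issued by any set of arguments coincide in $\bA$ and $\bA'$ except possibly those targeting $a$ --- together with Caminada's correspondence ($\mathtt{Ext2Lab}$ and $\mathtt{Lab2Ext}$ are mutually inverse bijections between $j$-extensions and $j$-labellings) and the standard fact that the grounded extension is the least complete extension (equivalently, the grounded labelling has the $\subseteq$-smallest set of $\i$-labels among complete labellings). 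Since the degree takes values only in the finite chain $0<0.3<0.5<1$, it suffices to fix each possible value $v$ of $\deg^{\bA}_{j}(a)$ and show $\deg^{\bA'}_{j}(a)\ge v$.

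For $v=0$ nothing is needed. For $v=0.3$ I would argue by contradiction: if $\deg^{\bA'}_{j}(a)=0$ then some $\cE'\in\ext_{j}(\bA')$ contains an attacker $b$ of $a$, so $\cL=\mathtt{Ext2Lab}(\cE')$ has $\cL(a)=\out$; Lemma~\ref{addition} then gives $\cL\in\mathtt{Lab}_{j}(\bA)$, and $\mathtt{Lab2Ext}(\cL)$ is a $j$-extension of $\bA$ containing an attacker of $a$, contradicting $v=0.3$. For $v=0.5$, some $j$-extension $\cE$ of $\bA$ contains $a$, so $\mathtt{Ext2Lab}(\cE)$ labels $a$ with $\i$, and Lemma~\ref{removal} makes it a $j$-labelling of $\bA'$ whose $\i$-set is a $j$-extension of $\bA'$ containing $a$; hence $\deg^{\bA'}_{j}(a)\ge0.5$.

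The substantive case is $v=1$, where the goal is to show that $a$ lies in every $j$-extension of $\bA'$ (or that there are none). If $\ext_{j}(\bA)=\emptyset$ --- possible only for $j=\st$ --- then any stable extension $\cE'$ of $\bA'$ not containing $a$ would attack $a$, so $\mathtt{Ext2Lab}(\cE')$ labels $a$ with $\out$ and Lemma~\ref{addition} produces a stable labelling of $\bA$, a contradiction. Otherwise $a$ belongs to every $j$-extension of $\bA$, and I split on $j$. For $j=\st$, a stable extension of $\bA'$ missing $a$ again yields, via $\mathtt{Ext2Lab}$ and Lemma~\ref{addition}, a stable extension of $\bA$ missing $a$ --- impossible. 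For $j=\gr$, $a$ is in the grounded extension of $\bA$, so its grounded labelling sends $a$ to $\i$, and Lemma~\ref{removal} identifies that labelling with the grounded labelling of $\bA'$, placing $a$ in the grounded extension of $\bA'$. For $j=\co$, $a$ is in the grounded extension of $\bA$ (as this is contained in every complete extension), hence, by the grounded case, in the grounded extension of $\bA'$, hence in every complete extension of $\bA'$.

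The hard part, I expect, is $j=\pr$ with $a$ in every preferred extension of $\bA$: here Lemma~\ref{addition} does not help as before, because deleting attackers of $a$ may turn $a$ from $\out$ into $\und$ in a preferred labelling of $\bA'$, so I would argue directly with admissibility. Take a preferred extension $\cE'$ of $\bA'$ with $a\notin\cE'$. Then $\cE'$ is admissible in $\bA$: it stays conflict-free (since $a\notin\cE'$ and the extra attacks of $\bA$ all land on $a$), and for each $e\in\cE'$ we have $\att_{\bA}(e)=\att_{\bA'}(e)$ and $\cE'$ still attacks each of these attackers, using attacks already present in $\bA'$. Extend $\cE'$ to a preferred extension $\cE''$ of $\bA$; by hypothesis $a\in\cE''$, so $\cE''\supsetneq\cE'$. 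Finally $\cE''$ is also admissible in $\bA'$: it is conflict-free there (fewer attacks), and for $e\in\cE''$ and $y\in\att_{\bA'}(e)$, either $e=a$, where $\att_{\bA'}(a)\subseteq\att_{\bA}(a)$ and $\cE''$ already defends $a$ in $\bA$, or $e\neq a$, where $\att_{\bA'}(e)=\att_{\bA}(e)$, no such $y$ equals $a$ (that would make $a$ attack a member of the $\bA$-conflict-free set $\cE''$), and attacks on any $y\neq a$ are unchanged. So $\cE''\supsetneq\cE'$ is admissible in $\bA'$, contradicting maximality of $\cE'$. Assembling the cases yields $\deg^{\bA}_{j}(a)\le\deg^{\bA'}_{j}(a)$ for every $j$, which is exactly attack removal monotonicity.
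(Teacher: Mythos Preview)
Your argument is correct. It differs from the paper's proof in two notable ways. First, the direction of the case split is reversed: the paper fixes the value of $\deg^{\bA'}_{x}(a)$ and bounds $\deg^{\bA}_{x}(a)$ from above, treating each semantics separately throughout, whereas you fix $v=\deg^{\bA}_{j}(a)$ and bound $\deg^{\bA'}_{j}(a)$ from below, which lets you handle the cases $v\in\{0,0.3,0.5\}$ uniformly for all four semantics with a single appeal to Lemma~\ref{addition} or Lemma~\ref{removal}. Second, for the case $v=1$ under preferred semantics you give a direct admissibility argument (transporting admissible sets between $\bA$ and $\bA'$ and invoking maximality), rather than working purely through labellings and Lemmas~\ref{addition}--\ref{removal} as the paper does. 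This is a genuine gain in robustness: if a preferred extension $\cE'$ of $\bA'$ omits $a$, its labelling may assign $\cL(a)=\und$ rather than $\out$, in which case Lemma~\ref{addition} is inapplicable; your admissibility argument sidesteps this entirely. The paper's route, by contrast, keeps the machinery homogeneous (everything via the two labelling lemmas) but has to treat the semantics one by one and, in the preferred/complete $0.5$ subcase, leans on the existence of a labelling with $\cL(a)=\out$. One minor wording issue: in your complete-semantics case you write ``$a$ is in the grounded extension of $\bA$ (as this is contained in every complete extension)''; the reason $a$ lies in the grounded extension is simply that the grounded extension is itself complete, and the inclusion of the grounded extension in every complete extension is what you use in the next step.
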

\begin{proof}
Let $\bA = \la \cA, \cR \ra \in \ag$. 
	Let $a \in \cA$ be such that $\att_{\bA}(a) \neq \emptyset$, 
	$X \subseteq \att_{\bA}(a)$, and 
	$x \in \{\co, \st, \pr, \gr\}$. 
	We put $\bA' = \bA\circleddash X = \langle \cA',\cR'\rangle$, where $\cR' = \cR \setminus X$. 
	Let us show that 
	$\deg^{\bA'}_{x}(a) \geq \deg^{\bA}_{x}(a)$.

	\begin{description}
		\item[Case 1.] $X = \att_{\bA}(a)$. This means that $\att_{\bA'}(a) = \emptyset$. 
		\begin{itemize}
			\item $\mathtt{Ext_x}(\bA') = \emptyset$ (the case is possible only for stable semantics). By definition $\deg^{\bA'}_x(a) = 1$. 
			
			\item $\mathtt{Ext_x}(\bA') \neq \emptyset$. From \cite{Dung95}, 
			$a \in \bigcap\limits_{\cE \in \mathtt{Ext_x}(\bA')} \cE$, and 
			hence $\deg^{\bA'}_x(a) = 1$.      	     
		\end{itemize}
		Since by definition, 
		$\deg^{\bA}_x(a) \in \{0, 0.3, 0.5, 1\}$, then $\deg^{\bA}_x(a) \leq 1$.
		\item[Case 2.] $X = \emptyset$. Hence, $\bA$ and $\bA'$ coincide, trivial. 
		%
		\item[Case 3.] $\emptyset \subsetneq X \subsetneq \att_{\bA}(a)$. 
		
		i) Assume that $\deg^{\bA'}_{x}(a) = 1$. Since by definition, 
		$\deg^{\bA}_x(a) \in \{0, 0.3, 0.5, 1\}$, then $\deg^{\bA}_x(a) \leq 1$. 
		
		ii) Assume now that $\deg^{\bA'}_{x}(a) < 1$. Thus, by definition $\ext_x(\bA') \neq \emptyset$. From now on, we proceed per semantics. 
		
		\begin{description}
		\item [Stable semantics:] $\deg^{\bA'}_{\st}(a) \in \{0, 0.5\}$
			\begin{itemize}
				\item Assume that $\deg^{\bA'}_{\st}(a) = 0.5$. 
				By definition, $\exists \cE \in \mathtt{Ext_\st}(\bA')$ such that $a \notin \cE$. 
				Let $\cL = \mathtt{Ext2Lab}(\cE)$. Since with stable semantics, $\cL(.) \in \{\i,\out\}$, then $\cL(a) = \out$.
				From Lemma \ref{addition}, $\cL \in \mathtt{Lab_\st}(\bA)$. 
				Consequently, $\deg^{\bA}_{\st}(a) < 1$.  
				
				\item Assume that $\deg^{\bA'}_{\st}(a) = 0$. By definition, 
				$\exists \cE \in \ext_\st(\bA')$ such that an element $b$ of $\cE$ attacks $a$ (i.e. $b \cR' a$). 
				Let $\cL = \mathtt{Ext2Lab}(\cE)$.
				From \cite{caminada}, $\cL \in \mathtt{Lab}_\st(\bA')$ and    
				$\cL(a) = \out$.  Furthermore, 
				$\nexists \cL' \in \mathtt{Lab}_\st(\bA')$ such that
				\begin{equation}
				\label{1}
				    				\cL'(a) = \i 
				\end{equation}
There are two cases:
				\begin{itemize}
					\item $\mathtt{Ext_\st}(\bA) = \emptyset$. But from Lemma \ref{addition}, $\cL \in  \mathtt{Lab_\st}(\bA)$ (meaning that $\cE$ is a stable extension of $\bA$). Contradiction.
					\item $\mathtt{Ext_\st}(\bA) \neq \emptyset$. Assume that 
					$\deg^{\bA}_{\st}(a) > 0$. Thus, 
					$\deg^{\bA}_{\st}(a) \in \{0.5, 1\}$ (under stable semantics, an argument cannot have degree 0.3). Hence,
					$\exists \cL' \in \mathtt{Lab}_\st(\bA)$ 
					such that $\cL'(a) = \i$. From Lemma \ref{removal}, $\cL' \in \mathtt{Lab_\st}(\bA')$. 
					This contradicts assumption $(1)$.
				\end{itemize}	
			\end{itemize}
			
			\item [Grounded semantics:] $\deg^{\bA'}_{\st}(a) \in \{0, 0.3\}$
			
			Let $\cL  = \mathtt{Ext2Lab}(\mathtt{GE}(\bA))$ 
			and $\cL' = \mathtt{Ext2Lab}(\mathtt{GE}(\bA'))$.
			\begin{itemize}
				\item Assume that $\deg^{\bA'}_{\gr}(a) = 0.3$. 		       
				By definition, $(\mathtt{Atts}_{\bA'}(a) \cup \{a\}) \cap \mathtt{GE}(\bA') = \emptyset$, 
				and from \cite{caminada}, $\cL'(a) = \und$. 
				
				Assume that $\deg^{\bA}_{\gr}(a) > 0.3$. Hence, 
				$\deg^{\bA}_{\gr}(a) = 1$ (since $\deg^{\bA}_{\gr}(a) \in \{0, 0.3, 1\}$). 
				From \cite{caminada}, $\cL(a) = \i$. From Lemma \ref{removal}, 
				$\cL = \cL'$, hence contradiction. 		      		
				
				\item Assume that $\deg^{\bA'}_{\gr}(a) = 0$. 
				From \cite{caminada}, $\cL'(a) = \out$. But from Lemma \ref{addition}, 
				$\cL' = \cL$. Hence, $\cL(a) = \out$ and $\deg^{\bA}_{\gr}(a) = 0$.	
			\end{itemize}

			\item [Complete and preferred semantics:] $\deg^{\bA'}_{x}(a) \in \{0, 0.3, 0.5\}$ with $x \in \{\co, \pr\}$.
			\begin{itemize}	
				\item Assume that $\deg^{\bA'}_{x}(a) = 0.5$. By definition, there exists $\cL \in \mathtt{Lab}_x(\bA')$ 
				such that $\cL(a) = \out$. From Lemma \ref{addition}, 
				$\cL \in \mathtt{Lab}_x(\bA)$. Then, $\deg^{\bA}_{x}(a) < 1$.
				

				\item Assume that $\deg^{\bA'}_{x}(a) = 0.3$. 		       
				By definition, $a$ does not belong to any complete/preferred extension. 
				Hence, $\nexists \cL \in \labx(\bA')$ such that $\cL(a) = \i$ (\textbf{1}).
				Assume that $\deg^{\bA}_{x}(a) > 0.3$. So, 
				$\deg^{\bA}_{x}(a) \in \{0.5, 1\}$). From \cite{caminada}, 
				$\exists \cL \in \labx(\bA)$ such that $\cL(a) = \i$. From Lemma \ref{removal}, 
				$\cL \in \labx(\bA')$, contradiction with \textbf{1}. 		      		
				
				\item Assume that $\deg^{\bA'}_{x}(a) = 0$. By definition of $\deg^{\bA'}_{x}(a)$, 
				$a$ is attacked by at least one complete/preferred extension $\cE$. 
				Thus, for $\cL = \mathtt{Ext2Lab}(\cE)$, it holds that $\cL(a) = \out$. 
				From Lemma \ref{addition}, $\cL \in \labx(\bA)$. Assume now that $\deg^{\bA}_{x}(a) > 0$.
				There are three cases:
				\begin{itemize}
					\item $\deg^{\bA}_{x}(a) = 1$. This is impossible since $\forall \cL' \in \labx(\bA)$, $\cL'(a) = \i$.
					\item $\deg^{\bA}_{x}(a) = 0.5$. This means that $\exists \cL' \in \labx(\bA)$ 
					such that $\cL'(a) = \i$. From Lemma \ref{removal}, $\cL' \in \labx(\bA')$. This would mean 
					that $\deg^{\bA'}_{x}(a) \geq 0.5$. This contradicts the assumption 
					$\deg^{\bA'}_{x}(a) = 0$. 
					\item $\deg^{\bA}_{x}(a) = 0.3$. This is impossible since $\cL \in \labx(\bA)$ 
					and $\cL(a) = \out$ meaning that $a$ is attacked by at least one complete/preferred extension.
				\end{itemize}	
			\end{itemize}
		\end{description}
	\end{description}
\end{proof}

\section{Appendix: the proof of the same result under an alternative definition of degrees}

The choice of Definition \ref{degree1} to assign the degree $1$ to all the arguments in case when there are no extensions can be disputable. We show that our main result still holds under an alternative definition of degrees. 

Let us present an alternative definition of degrees, where all the arguments are assigned the degree $0$ in case there are no extensions. The rest of the following definition coincides with Definition \ref{degree1}.

\begin{definition}
\label{degree2}
For every $a \in \cA$,  
if $\ext_x(\bA) = \emptyset$, then $\mathtt{Deg}^{\bA}_{x}(a) = 0$, otherwise, $\mathtt{Deg}^{\bA}_{x}(a)=$
\begin{itemize}
	\item  $1$ iff $a \in \bigcap \limits_{\cE \in \ext_x(\bA)} \cE$.

	\item $0.5$ iff $\exists \cE, \cE'  \in \ext_x(\bA)$ s.t $a \in \cE$, $a \notin \cE'$.
	
	\item $0.3$ iff $a \notin \bigcup \limits_{\cE \in \ext_x(\bA)} \cE$ and 
	$\nexists \cE \in \ext_x(\bA)$ s.t $\exists b \in \cE$ and $(b,a) \in \cR$.		      
	
	\item $0$ iff $a \notin \bigcup \limits_{\cE \in \ext_x(\bA)} \cE$ and 
	$\exists \cE \in \ext_x(\bA)$ s.t $\exists b \in \cE$ and $(b,a) \in \cR$. 
\end{itemize}
\end{definition}

\begin{proposition}
\label{extension-semantics-monotone-degrees2}
Stable, preferred, complete and grounded semantics satisfy attack removal monotonicity under the alternative definition of acceptability degree (Definition \ref{degree2}).
\end{proposition}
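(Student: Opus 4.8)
The plan is to mirror the proof of Proposition~\ref{extension-semantics-monotone} almost line for line, changing only the handling of the degenerate case where one of the two graphs has no extensions. Since Definition~\ref{degree2} differs from Definition~\ref{degree1} \emph{only} when $\ext_x(\bA) = \emptyset$, every subcase in the proof of Proposition~\ref{extension-semantics-monotone} in which both $\ext_x(\bA) \neq \emptyset$ and $\ext_x(\bA') \neq \emptyset$ carries over verbatim: Lemmas~\ref{addition} and~\ref{removal} are statements about labellings and make no reference to the degree function, and the four-valued comparison arguments only use the ``otherwise'' clause, which is identical in both definitions.

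So the only work is to re-examine the places where emptiness of an extension set is invoked. First I would note that by Dung's fundamental results, $\ext_{\co}(\bA)$, $\ext_{\pr}(\bA)$ and $\ext_{\gr}(\bA)$ are always nonempty, so the alternative clause can only ever be triggered under stable semantics; for $x \in \{\co,\pr,\gr\}$ the two definitions literally agree on all graphs and there is nothing to prove beyond citing Proposition~\ref{extension-semantics-monotone}. Then I would handle stable semantics. In Case~1 ($X = \att_{\bA}(a)$, so $\att_{\bA'}(a)=\emptyset$): if $\ext_{\st}(\bA') = \emptyset$ the new definition gives $\deg^{\bA'}_{\st}(a) = 0$, and one must then check $\deg^{\bA}_{\st}(a) \le 0$. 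This is the one genuinely new point: I would argue that if $a$ has no attacker in $\bA'$ yet $\bA'$ has no stable extension, then $\bA$ likewise has no stable extension — because adding back attacks \emph{into} $a$ (and only into $a$) cannot create a stable extension where there was none; more precisely, any stable extension $\cE$ of $\bA$ would, via $\mathtt{Ext2Lab}$ and Lemma~\ref{addition}-style reasoning, need $a$ labelled $\out$, but $a$ is unattacked in $\bA'$ so in any complete (hence stable) labelling of $\bA'$ obtained from $\cE$ the argument $a$ is $\i$, contradiction. If instead $\ext_{\st}(\bA')\neq\emptyset$, the original argument gives $\deg^{\bA'}_{\st}(a)=1 \ge \deg^{\bA}_{\st}(a)$ unchanged. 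Case~2 ($X=\emptyset$) is still trivial.

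Finally, Case~3 ($\emptyset \subsetneq X \subsetneq \att_{\bA}(a)$) for stable semantics needs a second look, since there $\ext_{\st}(\bA')$ could a priori be empty under the new convention, giving $\deg^{\bA'}_{\st}(a)=0$; one must then show $\deg^{\bA}_{\st}(a)=0$ as well. Again this follows from Lemma~\ref{addition}: a stable extension of $\bA$ with $a$ labelled $\out$ would restrict to a stable extension of $\bA'$, impossible; a stable extension of $\bA$ with $a$ labelled $\i$ would, by Lemma~\ref{removal}, also restrict to a stable extension of $\bA'$, impossible; so $\ext_{\st}(\bA)=\emptyset$ and $\deg^{\bA}_{\st}(a)=0$. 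The remaining subcases of Case~3 where $\ext_{\st}(\bA')\neq\emptyset$ are copied from the proof of Proposition~\ref{extension-semantics-monotone}, with the one cosmetic adjustment that in the subcase $\deg^{\bA'}_{\st}(a)=0$ the possibility $\ext_{\st}(\bA)=\emptyset$ now yields $\deg^{\bA}_{\st}(a)=0$ directly (rather than a contradiction), which still respects $\deg^{\bA}_{\st}(a) \le \deg^{\bA'}_{\st}(a)=0$.

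The main obstacle, such as it is, is purely bookkeeping: making sure that every appeal to ``$\ext_x(\bA)=\emptyset \Rightarrow \deg = 1$'' in the original proof is located and replaced, and that in each such spot the inequality still goes the right way under the value $0$. Since Lemmas~\ref{addition} and~\ref{removal} already pin down exactly how stable extensions of $\bA$ and $\bA\circleddash X$ relate through the label of $a$, no new structural lemma is needed; the proof is a careful transcription rather than a new argument.
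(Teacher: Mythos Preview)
Your reduction to stable semantics is exactly right, and your observation that every subcase with $\ext_{\st}(\bA)\neq\emptyset$ and $\ext_{\st}(\bA')\neq\emptyset$ carries over verbatim is correct. The problem is in the genuinely new subcases, where $\ext_{\st}(\bA')=\emptyset$. There you assert (both in Case~1 and Case~3) that $\ext_{\st}(\bA)=\emptyset$ as well, arguing in particular that ``a stable extension of $\bA$ with $a$ labelled $\out$ would restrict to a stable extension of $\bA'$''. This is false: Lemma~\ref{addition} only lets you go from $\bA'$ to $\bA$ when $\cL(a)=\out$, not the other way. Concretely, take
\[
\bA' = \langle\{a,b,c\},\ \{(a,b),(b,c),(c,c)\}\rangle,\qquad
\bA  = \bA' \oplus (b,a).
\]
Then $a$ is unattacked in $\bA'$ and $\bA'$ has no stable extension (any stable set would have to contain $a$, hence exclude $b$, hence leave $c$ unattacked), yet $\{b\}$ is a stable extension of $\bA$ with $\cL(a)=\out$. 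So adding an attack into $a$ \emph{can} create a stable extension where there was none, and your argument for $\ext_{\st}(\bA)=\emptyset$ collapses. An analogous four-argument example (insert a fresh attacker of $a$ sitting in an odd cycle) refutes the same step in your Case~3.

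The repair is that you only need the half of your dichotomy that \emph{does} work. If $\deg^{\bA'}_{\st}(a)=0$ and, towards a contradiction, $\deg^{\bA}_{\st}(a)>0$, then $\bA$ has a stable labelling with $\cL(a)=\i$; by Lemma~\ref{removal} this is a stable labelling of $\bA'$, so $a$ lies in some stable extension of $\bA'$, contradicting $\deg^{\bA'}_{\st}(a)=0$ (whether that $0$ came from emptiness or from $a$ being rejected). There is no need to control the $\cL(a)=\out$ case at all. This is precisely the paper's argument: it splits directly on $\deg^{\bA'}_{\st}(a)\in\{1,0.5,0\}$ and in the last case runs exactly this reductio via Lemma~\ref{removal}, handling ``no extensions'' and ``extensions exist but reject $a$'' uniformly.
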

\begin{proof}
We only need to consider stable semantics, since the proof for the other semantics is identical to the proof of Proposition \ref{extension-semantics-monotone}. 
Let $\bA = \la \cA, \cR \ra \in \ag$. 
	Let $a \in \cA$ be such that $\att_{\bA}(a) \neq \emptyset$, 
	let $X \subseteq \att_{\bA}(a)$. 
	We define $\bA' = \bA\circleddash X = \langle \cA',\cR'\rangle$, where $\cR' = \cR \setminus X$. 
	Let us show that 
	$\deg^{\bA'}_{\st}(a) \geq \deg^{\bA}_{\st}(a)$.
	\begin{description}
	\item[Case 1.] If $\deg^{\bA'}_{\st}(a) = 1$, the proof is over. 
	\item[Case 2.] Let  $\deg^{\bA'}_{\st}(a) = 0.5$. Let $\cL$ be a stable labelling of $\bA'$ such that $\cL(a) = \out$.  From Lemma \ref{addition}, $\cL$ is a stable labelling of $\bA$. Hence $\deg^{\bA}_{\st}(a) < 1$. 
	
	\item[Case 3.] Let $\deg^{\bA'}_{\st}(a) = 0$. It might be that there are no stable extensions of $\bA'$ or that there is a stable extension of $\bA'$. In both cases, we proceed by the proof by reductio ad absurdum. Aiming at the contradiction, let us suppose $\deg^{\bA}_{\st}(a) > 0$. This means that $\bA$ has at least one stable extension. Let $\cL$ be a stable labelling of $\bA$ such that $\cL(a) = \i$. From Lemma \ref{removal}, $\cL$ is a stable labelling of $\bA'$. Hence, $a$ belongs to at least one extension of $\bA'$. Contradiction.
	\end{description}
\end{proof}

\bibliographystyle{plain}
\bibliography{general}
\end{document}